\def\eqref#1{equation~\ref{#1}}
\def\1{\bm{1}}
\DeclareMathAlphabet{\mathsfit}{\encodingdefault}{\sfdefault}{m}{sl}
\SetMathAlphabet{\mathsfit}{bold}{\encodingdefault}{\sfdefault}{bx}{n}
\newcommand{\R}{\mathbb{R}}
\theoremstyle{plain}
\newtheorem{theorem}{Theorem}[section]
\theoremstyle{definition}
\newtheorem{definition}[theorem]{Definition}
\theoremstyle{remark}
\newtheorem{remark}[theorem]{Remark}
\begin{document}

\twocolumn[
\icmltitle{
Morse Neural Networks for Uncertainty Quantification
}

\icmlsetsymbol{equal}{*}

\begin{icmlauthorlist}
\icmlauthor{Benoit Dherin}{equal,google}
\icmlauthor{Huiyi Hu}{equal,deepmind}
\icmlauthor{Jie Ren}{deepmind}
\icmlauthor{Michael W. Dusenberry}{deepmind}
\icmlauthor{Balaji Lakshminarayanan}{deepmind}
\end{icmlauthorlist}

\icmlaffiliation{google}{Google}
\icmlaffiliation{deepmind}{Google DeepMind}

\icmlcorrespondingauthor{Benoit Dherin}{dherin@google.com}

\icmlkeywords{Machine Learning, ICML, uncertainty quantification, out-of-distribution detection, anomaly detection, generative probabilistic model, deep generative models}

\vskip 0.3in
]

\printAffiliationsAndNotice{\icmlEqualContribution} 

\begin{abstract}
We introduce a new deep generative model useful for uncertainty quantification: the Morse neural network, which generalizes the unnormalized Gaussian densities to have modes of high-dimensional submanifolds instead of just discrete points. Fitting the Morse neural network via a KL-divergence loss yields 1) a (unnormalized) generative density, 2) an OOD detector, 3) a calibration temperature, 4) a generative sampler, along with in the supervised case 5) a distance aware-classifier. The Morse network can be used on top of a pre-trained network to bring distance-aware calibration w.r.t the training data. Because of its versatility, the Morse neural networks unifies many techniques: e.g., the Entropic Out-of-Distribution Detector of \cite{macedo2021entropic} in OOD detection, the one class Deep Support Vector Description method of \cite{ruff2018deeponeclass} in anomaly detection, or the Contrastive One Class classifier in continuous learning \cite{sun2021iloc}.
The Morse neural network has connections to support vector machines, kernel methods, and Morse theory in topology.

\end{abstract}

\section{Introduction}
\label{sec:intro}
Neural networks are becoming prevalent in a large number of applications both in industry and research \cite{larson-etal-2019-evaluation,sharma2022deep, Jumper2021HighlyAP,ren2019likelihood,han2021reliable,kivlichan2021measuring}. Because of their impressive performances, these models are likely to become increasingly trusted in a wider range of domains, including sensitive applications like in the medical field \cite{Roy2021doesyourdermatology,kivlichan2021measuring,Jumper2021HighlyAP,han2021reliable}. This makes the ability to quantify when the predictions of a neural network should be considered uncertain a critical issue, especially since neural networks are known to deliver wrong predictions very confidently \cite{nguyen2014easilyfooled, goodfellow2015explaining,ovadia2019can,guo2017oncalibration,hendrycks2017a,lakshminarayanan2017simple}. As a result, the development of methods to quantify neural network uncertainty is an increasingly important subject in deep learning research \cite{amodei2016concrete}.   
In particular, neural networks tend to produce confidently wrong predictions when presented with Out-Of-Distribution (OOD) inputs, that is, inputs that are far away from the data distribution with which the model was trained \cite{pml2Book, nagarajan2021understanding, liu2020simple, liu2022simple}. Detecting OOD inputs as well as devising models that are aware of the distance from inputs to the training distribution are becoming key challenges in uncertainty quantification \cite{pml2Book}.  

One classical approach to detect OOD data is to fit a generative probability density to the In-Distribution data (IND) (since OOD points are often rare) and use it to detect OOD points as points with low probability \cite{pml2Book}. This works well for normal data with a single mode, but becomes computationally prohibitive when the data has very complex modes requiring fitting large mixtures of simpler parametric models. 
On the other hand, standard deep generative models are easier to train and in theory able to express very complex modes. However, they have been shown to have some difficulty to distinguish IND from OOD even in the simpler case of distinguishing between MNIST and FashionMNIST \cite{nalisnick2018do}. Non-generative deep learning approaches like the one class deep Support Vector Data Description (SVDD) of \cite{ruff2018deeponeclass} have yielded better results on that front. 

Another approach is to leverage the supervised information from a classifier to obtain finer-grained OOD detectors. For instance, \cite{kimin2018AsimpleUnified,ren2021SimpleFix} fit a multivariate Gaussian to the classifier embeddings for each of the labels, yielding a squared distance, the Mahalanobis distance, from the Gaussian modes, creating a powerful OOD detector. The Spectral Normalized Gaussian Process (SNGP) in \cite{liu2020simple,liu2022simple} on the other hand fits an approximate Gaussian process to the classifier embeddings producing an input-dependent temperature that is used to scale the classifier logits so that it becomes distance-aware (i.e. its uncertainty grows as we move away from the training set distribution). Finally, the entropic OOD detector from \cite{macedo2021entropic} learns a classifier whose logits are distance-aware by construction and uses the entropy score of that classifier as an OOD detector. 
For further reference, we refer the reader to \cite{salehi2022a, Bulusu2020AnomalousED,ruff2018deeponeclass} for comprehensive surveys of the literature.

We propose a new deep generative model, the Morse network, which unifies a number of the separate supervised and unsupervised techniques mentioned. This model produces a join (unnormalized) density $\mu(x,y)$ by taking the kernel similarity $K(\phi_\theta(x), T(y))$ between the image of the feature $x$ and the one-hot-encoded $T(y)$ version of the label $y$, which we set to a fixed value $a$ in the unsupervised case (see sections \ref{section:unsupervised} and \ref{section:supervised} for details). The unsupervised formulation comprises mixtures of (unnormalized) Gaussian densities (and more generally exponential family densities) along with more flexible densities whose modes are submanifolds rather than discrete points.
The unsupervised Morse network with a Gaussian kernel degenerates to the deep one class SVDD proposed in \cite{ruff2018deeponeclass}, except for a built-in regularizer in the loss somewhat reminiscent of the mixup regularizer \cite{pinto2022using}. For the Cauchy kernel, it has a temperature whose form coincides with that of SNGP, except that the variance is learned and not computed by large matrix inversion. The supervised Morse network with a Laplace kernel produces a distance-aware classifier that coincides with the entropy OOD classifier from \cite{macedo2021entropic}.
Although this is not the focus of this work, we explain how the Morse network yields a sample generator by following certain gradient flows from random initial points, very much in the spirit of the Poisson generative model \cite{xu2022poisson}.

The focus of this work is to expose this unifying idea, and to explore the relationships with known approaches. Comprehensive evaluation of the approach and its extensions is the topic of future work.

\section{The Unsupervised Morse Neural Network}
\label{section:unsupervised}

We now introduce the (unsupervised) Morse neural networks. These networks produce (unnormalized) generative densities $\mu_\theta(x)\in [0, 1]$ directly on a feature space $X=\R^d$ or on a space of embeddings  $\mu_\theta(h(x))\in [0, 1]$ of the original features obtained from a pre-trained network $h(x)$. Morse neural networks are very expressive in terms of the modes they can produce (see examples below). Recall that the modes of an (unormalized) density $\mu: X\rightarrow [0,1]$ is the subset $\textrm{modes}(\mu) \subset X$ where the density achieves its highest possible value, namely 1. This set can be reduced to a single point (e.g., the mean of a Gaussian) or it can be more complex such as a smooth subset of $X$ of higher dimension, like a curve, or a surface, or, more generally a $k$-dimensional submanifold of $X$, as is the case for the Morse neural network densities.
Intuitively, these generative densities are uniformly $1$ on their mode submanifolds and decrease to $0$ as we move away from these modes at a speed controlled by a special type of kernels, which we call Morse kernels:

\begin{definition}\label{definition:morse_kernel} 
A \emph{Morse kernel} $K$ on a space $Z=\R^k$ is a positive kernel $K(z_1, z_2)$ taking its values in the interval $[0,1]$ and such that $K(z_1,z_2) = 1$ if and only if $z_1 = z_2$.
\end{definition}

Many common kernels are Morse kernels. Namely, all kernels of the form $K(z_1, z_2) = \exp(-\lambda D(z_1, z_2))$ where $D$ is a divergence in the sense of \citet{AmariShun-ichi2016IGaI} are Morse kernels (since $D(z_1, z_2) = 0$ if and only if $z_1 = z_2$). The Gaussian kernel and the Laplace Kernel are Morse Kernel, as well as the Cauchy kernel  $K(z_1, z_2) = \frac 1{1 + \lambda \|z_1 - z_2\|^2}$.

We are now ready to define the Morse neural network:

\begin{definition}
A \emph{Morse neural network} is defined by the data of 1) a neural network $\phi_\theta:X\rightarrow Z$ (with parameters $\theta$) from the feature space $X=\R^d$ to a space $Z=\R^k$, and 2) a Morse kernel $K$ on $Z$.  The (unnormalized) density of a point $x\in X$ is given by
\begin{equation}
    \mu_\theta(x) = K(\phi_\theta(x), a)
\end{equation}
where $a$ is treated as an hyper-parameter of the model.
\end{definition}
From the properties of Morse kernels, it is easy to see that $\mu_\theta(x)\in [0, 1]$ and that the modes of $\mu_\theta(x)$ (i.e., the points where $\mu_\theta(x)$ reaches 1, its highest possible value) coincide with the level set of $\phi_\theta$ (Sec. \ref{section:morse}):
\begin{equation}
    \textrm{modes}(\mu_\theta) = \{x\in X:\: \phi_\theta(x) = a\}.
\end{equation}

Fitting the Morse network to a dataset so that its modes  approximate the modes of the data distribution is explained in section \ref{section:fitting}. Applications of a fitted unsupervised Morse network are detailed in section \ref{section:applications}, which we briefly summarize here:

First, the input dependent temperature $T_\theta(x) = 1/\mu_\theta(x)$, which is 1 on the density modes and grows away from them  can be used to scale the logit of a classifier to make it distance-aware in the spirit of \cite{liu2022simple}.
Second, the classifier $s_\theta(x)= 1 - \mu_\theta(x)$ is a epistemic uncertainty score measuring our uncertainty in whether the point $x$ comes from the training distribution. Hence it can be used as an OOD detector.
Third, the function $V_\theta(x) = -\log \mu_\theta(x)$ is a form of square distance from the density modes (see section \ref{section:morse} for details and relationship with Morse-Bott theory).
In consequence, its negative gradient field flow $-\nabla_x V_\theta(x)$ converges to the mode submanifolds, giving us a way to generate new samples from random initial points very much in the spirit of the generative Poisson flow from \cite{xu2022poisson}.

We highlight several examples to showcase the flexibility of this model.

\paragraph{Location/Scale densities: $k=d$.} All the standard location/scale densities of the form $f((x-\mu)^T\Sigma^{-1}(x-\mu))$ can be recovered using a linear neural network with one layer with invertible weight matrix.  This  encompasses the multivariate Gaussian, Student-$t$, Cauchy, and Laplace densities. For all these densities we can take the same neural network  $\phi(x) = \Sigma^{-\frac 12}(\mu - x)$ and $a=0$, but we change the kernel: The Gaussian kernel $K(x, x') = \exp(-\frac 12 \|x - x'\|^2)$ produces the (unnormalized) multivariate Gaussian;  the Laplace kernel $K(x, x') = \exp(- \|x - x'\|)$ produces the (unnormalized) multivariate Laplace density; the Student-$t$ kernel with $\nu$ degrees of freedom $K(x, x') = (1 + \frac 1\nu \|x - x'\|^2)^{-\frac{d+\nu}{2} }$ produces the multivariate (unnormalized) Student-$t$ density, of which the Cauchy density is a particular case.

\paragraph{Distributions with mode submanifold: $k<d$.}
To showcase that the Morse network can produce densities with mode submanifolds, we device here an example where the density modes consist in a sphere whose radius is controlled by the hyper-parameter $a$. For that, consider the map $\phi(x) = \|x\|$ and the Gaussian kernel $K_\sigma(z, z')  = \exp(-\frac 1{2\sigma^2}(z-z')^2)$ on $\R$ with bandwidth $\sigma^2$. We  obtain the density $\mu_{a, \sigma^2}(x) = \exp(-\frac {1}{2\sigma^2}(\|x\| - a)^2)$, which has for modes the sphere of radius $a$. More generally, any regular value $a\in Z=\R^k$ of a differentiable map $\phi:\R^d\rightarrow \R^k$ will produce a density with mode submanifold of dimension $d-k$. In the kernel bandwidth limit $\sigma^2\rightarrow 0$, the resulting limiting density is the uniform density on the level sets of $\phi$.

\paragraph{Mixture models: $k>d$}
We illustrate how the Morse neural network can produce density mixtures. For instance,
a mixture of $l$ Gaussian densities on $X$ is captured by taking the neural network $\phi: X \rightarrow X^l$, where $\phi_i(x) = \Sigma^{-\frac 12}(\mu_i - x)$ is the map for the $i^{th}$ multivariate Gaussian on $X$ and $a$ is the zero vector. The kernel on $Z=X^l$ is given by a convex sum of the Gaussian kernels on the components of $Z$: $K(z, z') = \sum_{i=1}^l \alpha_i \exp(-\frac 12\|x_i - x_i'\|^2)$, where $z = (x_1, \dots, x_l)$. Since in this case, the dimension of $Z$ is larger than that of $X$ the pre-image of zero is the empty set. However, the zero-level sets of the components of $\phi$ give back the modes of the mixture of Gaussian densities. One can obtain mixtures of different densities by changing the component kernels.

\paragraph{Relation to divergences and the exponential family.}
Given a divergence $D(z,z')$ in the sense of \citet{AmariShun-ichi2016IGaI}, then $K(z,z') = e^{-\lambda D(z,z')}$ is a Morse kernel. A convex function $A$ on $Z$ produces a Bregman divergence $D_A(z,z') = A(z) + A^*(\eta') - z\eta'$ where $A^*$ is the dual convex function and $\eta' = \nabla A(z')$ is the Legendre transform (see \citet{AmariShun-ichi2016IGaI} for details). The Morse network for this kernel is then 
$
\mu_\theta(x) = e^{
\lambda(
    \phi_\theta(x)a - A^*(a) - A(\phi_\theta(x))
)
}
$
which an unnormalized version of the exponential family with canonical parameter $a$, cumulant generating function $A^*$, and dispersion $\lambda$, when $\phi_\theta(x)$ is the identity. One recovers the Gaussian case with $A(z)=\frac 12 z^2$.

\paragraph{Relation to Energy-based models.}
Energy-based models \cite{goodfellow2016deep} have been commonly used in generative modeling \cite{zhao2017energybased, gao2021learning, arbel2021generalized,che2010yougan}. In fact, the Morse neural network can be rewritten as an unormalized energy-based model as follows
\begin{equation}
\mu_\theta(x) = e^{-V_\theta(x)}
\end{equation}
where the model energy is parameterized using the Morse kernel $V_\theta(x) = -\log K(\phi_\theta(x), a)$. In section \ref{section:morse}, we show that this positive function satisfies the Morse-Bott condition on the density modes and thus can be interpreted as a sort of squared distance from the modes. Note that normalized energy models have been used in the context of uncertainty quantification \cite{wang2021energybased}, where a classifier uncertainty is added as extra dimension and learned as joint energy model. Also note that the Morse neural network with exponential family Morse kernel resembles the conjugate energy models from \citet{wu2021conjugate}.

\subsection{Applications and experimental results}\label{section:applications}

\paragraph{OOD detection:}  Since $\mu_\theta(x) \in [0,1]$ yields an density which is 1 on the modes and goes to zero as the distance from the mode increases, $s_\theta(x) :=  1 - \mu_\theta(x)$ provides a measure of how uncertain we are about $x$ being drawn from the training distribution (epistemic uncertainty). Hence $s_\theta(x)$ can be used as an OOD detector.  Figure \ref{figure:two_moons} (bottom row, middle right) shows that $s_\theta(x)$ classifies points as OOD (value ~1) as the distance from the two-moons dataset increases. As Table 1 shows, the unsupervised generative Morse network produces a detector capable of distinguishing MNIST images from the FashionMNIST training images in contrast to other deep generative models \cite{nalisnick2018do}. It is also able to distinguish between CIFAR10 and CIFAR100 when trained on a vision transformer embeddings as reported in Table 1.
Note that for the Gaussian kernel, $V_\theta(x) = -\log \mu_\theta(x)$ coincides with the anomaly score introduced in \cite{ruff2018deeponeclass} and with the Mahalanobis distance when the network is further linear and invertible.

\begin{figure}[th]
\centering
\includegraphics[width=1\columnwidth]{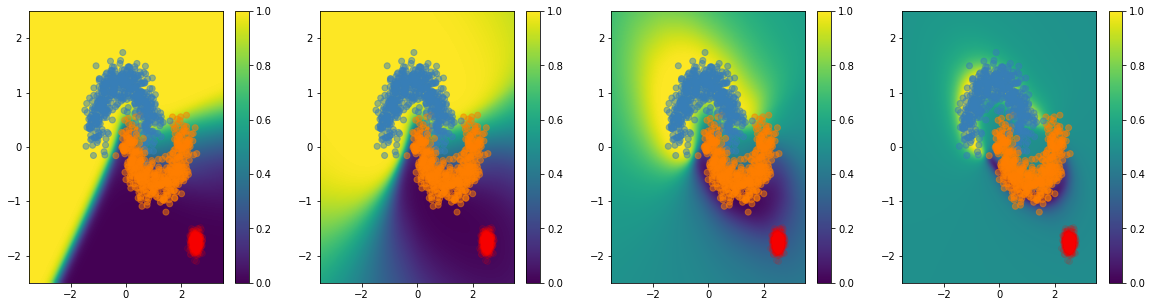}
\includegraphics[width=\columnwidth]{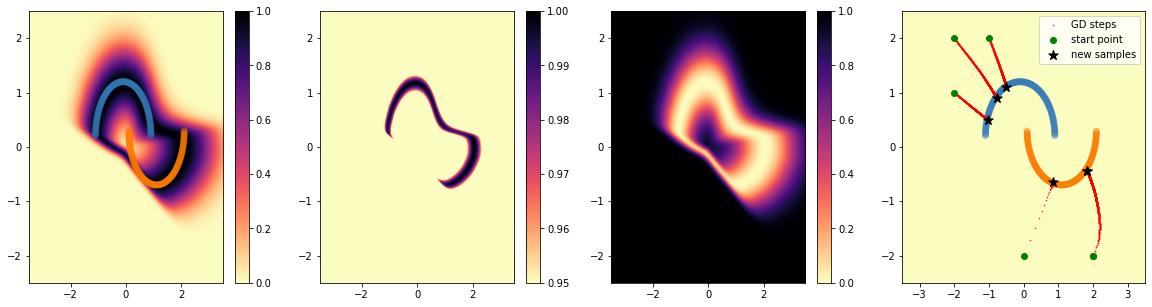}
\vspace{-2em}
\caption{
\textbf{Top row (distance-aware calibration):} \textbf{Left:} Probability output of a ResNet trained to separate the noisy two-moons dataset. \textbf{Middle and Right:} Probability plots of the same ResNet but with its logits calibrated using the unsupervised Gaussian Morse temperature at decreasing kernel bandwidth. The classifier becomes uncertain away from the training data.
\textbf{Bottom row (noiseless two-moons):} \textbf{Left (density plot):} The Morse density concentrates on the two-moons modes. \textbf{Middle Left (mode plot):} The modes learned by the Morse network approximate well the two-moons modes. \textbf{Middle Right (OOD detection):} the Morse detector classifies points away from the modes as OOD. \textbf{Right (sample generation):} Random points following the Morse flow converge to the modes. See Appendix \ref{section:experiment_details} for experiment details.} 
\label{figure:two_moons}
\end{figure}
    
\paragraph{Distance-aware calibration:} We can use the Morse network to calibrate a classifier $f(x) = \textrm{softmax}(h(x))$ trained on a supervised dataset $D=\{(x_i, y_i)\}$ so that it becomes less confident on points far way from the data distribution in the same spirit as SNGP \cite{liu2022simple}. The idea is to fit a Morse network on the input data. Then we can scale the classifier logits $h(x)$ by the inverse of the Morse temperature $T_\theta(x) : = 1/\mu_\theta(x)$. This produces a classifier that becomes uncertain outside of the domain where it has been trained (the more so as the Kernel bandwidth decreases), as demonstrated in Figure \ref{figure:two_moons} (top row). Note that when using the kernel $K_\lambda(z, z') = 1/\sqrt{(1 + \lambda \|z - z'\|^2)}$, the temperature becomes $T_\theta(x) = \sqrt{1 + \lambda V_\theta(x)}$, where $V_\theta(x) = -\log \mu_\theta(x)$, which has the same form as the SNGP temperature of \cite{liu2022simple,liu2020simple}, except that the squared distance function $V_\theta(x)$ replaces the SNGP approximate variance.

\paragraph{Sample generation:} The Morse gradient field $F(x) = -\nabla_x V_\theta(x)$ is attracted to the density modes, which are the global minima (zeros) of the positive function $V_\theta(x) = -\log K(\phi_\theta(x), a)$. If we follow its gradient flow from a random point, we obtain a new sample close to the data distribution mode. This approach resembles the Poisson generative flow from \cite{xu2022poisson}, where the $V_\theta(x)$ is called a potential and is derived by solving the Poisson PDE. We illustrate this approach with the two-moons dataset in Figure \ref{figure:two_moons} (bottom, right) where random points following the Morse flow converge toward the two-moons dataset modes using gradient descent on $V_\theta(X)$.

\begin{table}[h] 
\begin{center} \label{table:ood_detection_results}
\begin{tabular}{c | c | c} 
 IND/OOD & Method & AUROC\\ 
 \hline
 FashionMNIST / MNIST & Morse  & 0.998     \\ 
                      & $\textbf{DoSE}_{KDE}$ & 0.998 \\ 
 CIFAR10 / CIFAR100   & Morse      & 0.955  \\
                      & $\textbf{DoSE}_{SVM}$ & 0.571 \\
\end{tabular}
\caption{\textbf{OOD detection with unsupervised Morse:} As a proof of concept, we report unsupervised Morse AUROC along with best baselines from \cite{warren2021density}. 
For CIFAR10/CIFAR100, we report Morse AUROC trained on embeddings from a pre-trained vision transformer to demonstrate the benefit of this approach. If trained on CIFAR10 directly, Morse AUROC is 0.569, corresponding to the second best performance ($\textbf{DoSE}_{KDE}$) from \cite{warren2021density}. (See Appendix \ref{section:experiment_details}.)
}
\end{center}
\end{table}

\section{Fitting the Morse neural network}
\label{section:fitting}

Consider a dataset $D = \{x_1, \dots, x_n\}$ sampled from a data distribution with density $p(x)$. Theoretically, we want to find the neural network parameters $\theta$ that minimize the KL divergence $\textrm{KL}(p(x) \|\, \mu_\theta(x))$ for unnormalized densities (i.e., positive measures; see \cite{AmariShun-ichi2016IGaI}) between the probability density $p(x)$ generating the sample and the Morse network density $\mu_\theta(x)$, that is,
\begin{equation}
   \mathbb{E}_{x \sim p(x)}\left(\log \frac{p(x)}{\mu_\theta(x)}\right) \\
   + \int \mu_\theta(x) dx - \int p(x) dx
\end{equation}
which amounts to minimizing w.r.t. $\theta$ the following quantity
\begin{equation*}
    \mathbb{E}_{x \sim p(x)}\left(-\log K(\phi_\theta(x), a)\right)
    + \mathbb{E}_{x \sim \mathrm{uni}}\left(K(\phi_\theta(x), a)\right)
\end{equation*}
The corresponding empirical loss is then
\begin{equation*}
    L(\theta) = -\frac 1n\sum_{x\in D} \log K(\phi_\theta(x), a) + \frac 1n \sum_{x \in D_{\textrm{uni}}} K(\phi_\theta(x), a)
\end{equation*}
which can be optimized with any iterative gradient-based optimizer. Note that $D_{\textrm{uni}}$ are points uniformly sampled in $X=\R^d$, and that the second term in the Morse loss can be interpreted as a form of regularization in the spirit of the mixup regularizer term \cite{pinto2022using}. For the Gaussian kernel the first term of this loss (with an added L2 penalty) coincides with the one-class Deep SVDD objective proposed in \cite{ruff2018deeponeclass} for anomaly detection as a way to simplify the loss of a deep one-class SVM in the case of normal data.

\section{The supervised Morse neural network}
\label{section:supervised}

The Morse neural network architecture offers a natural way to incorporate supervised labels to obtain a finer grained generative model. There are two ways to do it:

\paragraph{Separate Morse networks:}
For each label $y\in \{1,\dots,C\}$, we can fit a separate unsupervised Morse network to the subset of the data with label $y=i$ producing a separate density $\mu(x | i) = K_i(\phi_{\theta_i}(x), a_i)$ for each label. The overall density is then given by the average $\mu(x) = \frac 1C \sum_i \mu(x | i)$. The corresponding OOD detector is $s(x) = 1 - \mu(x)$.
We can also create a classifier from this data by interpreting $V_i(x) = -\log \mu(x | i)$ as a squared distance between $x$ and the modes of the data with label $i$; the classifier simply associates to a point $x$ the label $i$ which has the smallest $V_i(x)$. When the kernels are taken to be all Gaussian, the resulting classifier  coincides with the ILOC classifier \cite{sun2021iloc} (except for the regularizing terms in the loss) used in the context of continual learning to avoid catastrophic forgetting since it allows to learn new tasks in complete isolation from the previous ones. (See also \cite{hu2021continual} for similar ideas in the same continual learning context).

\paragraph{Shared Morse network:}
The previous approach can be computationally intensive in the case of a large number of labels. The Morse network offers us a more efficient way to proceed by taking as model for the distribution join density
\begin{equation}
    \mu(x, y) = K(\phi_{\theta}(x), T(y)).
\end{equation}
where $T(y)$ is the one-hot-encoded version of the label $y\in\{1,\dots, C\}$ and $\phi_\theta:X\rightarrow \R^C$ is a shared neural network for all the labels. (For simplicity, we identify $y$ and $T(y)$ and will use at time $e_i$ to denote the $i^{th}$ basis vector.) We can use the same KL divergence  minimization principle (between unnormalized densities) as in the unsupervised case exposed in section \ref{section:fitting} but using now the joint density $p(x,y)$ as the density generating the data, yielding the following empirical Loss for the supervised network:
\begin{equation*}
    L(\theta) = 
    -\frac 1n\sum_{(x,y)} \log K(\phi_\theta(x), y) 
    + \frac 1n \sum_{(x', y')} K(\phi_\theta(x'), y')
\end{equation*}
where $(x,y)$ range over the supervised dataset, and the $(x',y')$'s are obtained by uniform sampling on the join feature and label space. 
In this case, the generative density can be obtained by marginalization
\begin{equation}
    \mu(x) = \sum_{y} K(\phi_\theta(x), y),
\end{equation}
producing the OOD detector $s(x) = 1 - \mu(x)$. 
\paragraph{Experimental results:} Using this supervised Morse detector trained on the embeddings $h(x)$ of a vision transformer fined-tuned to CIFAR10 we observe an improvement in AUROC from 0.955 (for the unsupervised Morse detector) to 0.969 in the same setting for CIFAR100 detection.  Figure \ref{figure:two_moons_supervised} also illustrates visually how the supervised Morse network can learn disconnected mode submanifolds better than the unsupervised version. (See section \ref{section:experiment_details} for details).

Note that we also obtain a classifier that is distance-aware in the sense of \cite{liu2022simple} by construction:
\begin{equation}
    \mu(y | x) 
    = 
    \frac{K(\phi_\theta(x), y)}{\sum_{y'}K(\phi_\theta(x), y')}
    =
    \frac{\exp(-V_y(x))}{\sum_{y'}\exp(-V_{y'}(x))}
\end{equation}
In the case of the Laplace kernel, this classifier coincides with the classifier used for entropic OOD detection \cite{macedo2021entropic} where the entropy value of the classifier output is used as an OOD score. The main difference between the Morse supervised classifier with Laplace kernel and the entropic classifier from \cite{macedo2021entropic} is that the $y$'s are learned for each class and the maximum likelihoods loss is used on $\mu(y|x)$ rather than the Morse loss, making the second term of both loss related but different. 

\section*{Acknowledgements}

We would like to thank Jasper Snoek, Sharat Chikkerur, Mihaela Rosca, James Allingham, Alan Weinstein, and the reviewers for helpful discussions and feedback as well as Patrick Cole his support.


\newpage
\appendix
\onecolumn

\section{Morse-Bott condition and squared distances}\label{section:morse}

An important notion in uncertainty quantification is the notion of a squared distance from the training set. It can be used to devise anomaly scores, OOD detectors, or distance-aware input-dependent calibration temperatures. For instance the SNGP approximate variance from \cite{liu2020simple, liu2022simple} can be thought of as such a squared distance. Similarly the Mahalanobis distance is also a form of a squared distance from a point to the single mode of an underlying multivariate Gaussian distribution \cite{kimin2018AsimpleUnified, ren2021SimpleFix}. For densities that have modes that are no longer discrete sets but rather submanifolds $M\subset X$ in the feature (or embedding) space $X$, the corresponding notion is that of a minimal distance between a point $x\in X$ and the mode submanifold $M$:
$$
d(x) = \min_{m\in M} \{d(x,m):\: m\in M\}.
$$
In \cite{basu2020aconnection,prasad2023cutlocus}, the authors show that such a function  encodes important information about the topology of the submanifold $M$. In particular, they show that the square $V(x)=d(x)^2$ of a distance function $d(x)$ from a point $x$ in a Riemannian submanifold to submanifold $M\subset X$ satisfies the Morse-Bott non-degeneracy condition on $M$ \cite{Austin1995MorseBottTA}. 
Recall that a function (with a submanifold of critical points) satisfies the Morse-Bott non-degeneracy condition at a critical point if its Hessian vanishes only in the directions tangent to the critical submanifold at that point. 
The Morse-Bott condition is important for squared distances because when a positive function $V(x)$ satisfies it on its set of global minima $M=\{x\in X:\:V(x) = 0\}$  the Morse-Bott Lemma \cite{banyaga2004aproof} tells us that $V(x)$  can be expressed locally as squared distance from $M$ for a certain metric.

The theorem below shows that, for a kernels $K$ satisfying the conditions stated in Definition \ref{definition:morse_kernel} (we call these kernels \emph{Morse kernels}), the negative logarithm of a Morse network
$$
V_\theta(x) = -\log K(\phi_\theta(x), a)
$$
satisfies the Morse-Bott condition on its submanifold of zeros. Since this submanifold is the mode submanifold $M$ of the Morse density
$
\mu_\theta(x) = K(\phi_\theta(x), a),
$
and since $V$ is positive away from $M$ for Morse kernels, the Morse-Bott lemma tells us that $V(x)$ is locally a squared distance from the mode submanifold for a certain metric. (The characterization of this metric is beyond the scope of this work.) 
This provides an intuitive formulation of the Morse network density as the negative exponential of the distance from the mode submanifold  
$$\mu_\theta(x) = e^{-V_\theta(x)}.$$
Note that this formulation resembles an unnormalized Gibbs measure where $V_\theta(x)$ plays the role of the configuration energy with minimal energy located at the density modes. 

We now give a precise definition of what a Morse kernel is before proving our theorem. 

\begin{definition}\label{definition:morse_kernel} 
A \emph{Morse kernel} $K$ on a space $Z=\R^k$ is a positive (non-necessarily symmetric) kernel $K(z_1, z_2)$ taking its values in the interval $[0,1]$ and such that $K(z_1,z_2) = 1$ if and only if $z_1 = z_2$.
\end{definition}

Note that all kernels of the form $K(z_1, z_2) = \exp(-\lambda D(z_1, z_2))$ where $D$ is a divergence in the sense of Amari \cite{AmariShun-ichi2016IGaI} are Morse kernels (since $D(z_1, z_2) = 0$ if and only if $z_1 = z_2$, in particular, the Gaussian kernel is a Morse Kernel. The Cauchy kernel  $K(z_1, z_2) = \frac 1{1 + \lambda \|z_1 - z_2\|^2}$ is also an example of a Morse kernel.

\begin{theorem} \label{thm:morse}
Let $\mu_\theta(x) = K(\phi_\theta(x), a)$ be a Morse network with Morse kernel $K$. We denote by $M_a = \{x:\:\phi_\theta(x) = a\}$ the level set of $\phi_\theta(x)$ at level $a$. Then we have the following properties:
\begin{enumerate}
    \item \label{statement1} The Morse network $\mu_\theta(x)$ takes its values in $[0,1]$.
    \item \label{statement2} $M_a$ is the mode submanifold of $\mu_\theta(x)$, that is, the locus of points in $X$ where $\mu_\theta(x)$ reaches its highest value $1$.
    \item \label{statement3} The function $V_\theta(x) = -\log \mu_\theta(x)$ is positive with its locus of zeros coinciding with $M_a$. 
    
    \item \label{statement4} $V_\theta(x)$ satisfies the Morse-Bott condition on $M_a$.
\end{enumerate}
\end{theorem}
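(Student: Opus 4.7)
The plan is to dispatch the first three statements directly from the Morse-kernel axioms, then attack statement \ref{statement4} via a chain-rule computation of the Hessian of $V_\theta$ along $M_a$.

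Statements \ref{statement1}--\ref{statement3} are essentially bookkeeping. For \ref{statement1}, $\mu_\theta(x) = K(\phi_\theta(x), a) \in [0,1]$ because $K$ takes values in $[0,1]$ by the definition of a Morse kernel. For \ref{statement2}, the Morse kernel property gives $\mu_\theta(x) = 1 \iff K(\phi_\theta(x), a) = 1 \iff \phi_\theta(x) = a$, so the mode set is exactly $M_a$. For \ref{statement3}, positivity of $V_\theta$ follows since $\mu_\theta \in (0,1]$, and $V_\theta(x) = 0 \iff \mu_\theta(x) = 1 \iff x \in M_a$ by \ref{statement2}.

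For statement \ref{statement4}, write $V_\theta = f \circ \phi_\theta$ with $f(z) := -\log K(z,a)$. Because $a$ is the unique global minimum of $f$ on $Z$, we have $f(a) = 0$ and (assuming smoothness of $K(\cdot, a)$ near $a$) $\nabla f(a) = 0$. The chain rule then yields
\begin{equation*}
\nabla V_\theta(x) = D\phi_\theta(x)^\top \, \nabla f(\phi_\theta(x)),
\end{equation*}
which vanishes on $M_a$, confirming that $M_a$ is a critical submanifold. Differentiating once more and evaluating at $x \in M_a$, the term proportional to $\nabla f(a)$ drops out, leaving
\begin{equation*}
\mathrm{Hess}\, V_\theta(x) \;=\; D\phi_\theta(x)^\top \, \mathrm{Hess}\, f(a) \, D\phi_\theta(x).
\end{equation*}
Assuming $\phi_\theta$ is a submersion along $M_a$ (so that $M_a$ is a genuine submanifold by the regular-value theorem), we have $T_x M_a = \ker D\phi_\theta(x)$. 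Hence $\mathrm{Hess}\, V_\theta(x)$ automatically vanishes on $T_x M_a$, which is one half of the Morse-Bott condition. The other half, non-degeneracy on a complement of $T_x M_a$, reduces to showing that $\mathrm{Hess}\, f(a)$ is positive definite on $Z$; pulling back by the full-rank map $D\phi_\theta(x)$ then gives a quadratic form that is strictly positive on any complement of $\ker D\phi_\theta(x)$.

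The main obstacle is the last point. The Morse-kernel axioms only guarantee that $a$ is a \emph{strict} global minimum of $f$, not a \emph{non-degenerate} one, so in full generality the statement needs the additional hypothesis that $\mathrm{Hess}(-\log K(\cdot, a))|_{z=a}$ is positive definite. This is straightforward to verify for the running examples: for the Gaussian kernel one gets $f(z) = \tfrac{1}{2}\|z-a\|^2$ with Hessian equal to the identity, for the Cauchy kernel one gets Hessian $2\lambda I$, and for a Bregman-divergence kernel $K = e^{-\lambda D_A}$ the Hessian at $a$ equals $\lambda \, \mathrm{Hess}\, A(a)$, which is positive definite by strict convexity of $A$. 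I would therefore state the theorem under an explicit non-degeneracy assumption on the kernel at its diagonal, and close the proof by the calculation above.
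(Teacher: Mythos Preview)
Your argument is correct and follows essentially the same chain-rule route as the paper: both compute the Hessian of $V_\theta$ at a point of $M_a$, use that the first derivative of $K(\cdot,a)$ vanishes at $a$ to kill all but the ``pullback'' term, and then read off the Morse--Bott condition from the rank of $D\phi_\theta(x)$. Two differences are worth noting. First, the paper carries out the computation only in the scalar case $Z=\R$, writing the Hessian as $-\partial_{z_1}^2 K(a,a)\,\nabla_x\phi(x)^T\nabla_x\phi(x)$ and identifying its kernel with the tangent space of $M_a$; your version with $D\phi_\theta(x)^\top\,\mathrm{Hess}\,f(a)\,D\phi_\theta(x)$ handles arbitrary $k$ and is the cleaner formulation. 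Second, and more substantively, you explicitly isolate the extra hypothesis that $\mathrm{Hess}(-\log K(\cdot,a))|_{z=a}$ be positive definite, whereas the paper simply asserts $\partial_{z_1}^2 K(a,a)<0$ as if it followed from $a$ being a global maximum. As you point out, a strict global maximum need not be non-degenerate, so this is genuinely an additional assumption on the kernel (automatically satisfied for the Gaussian, Cauchy, and Bregman examples). Your decision to state it openly is the more honest treatment; the paper's proof tacitly relies on it. Both proofs also share the regular-value assumption on $a$, which the paper relegates to a remark after the theorem.
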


\begin{proof}
Statement \ref{statement1} derives from the fact that a Morse kernel takes its values in $[0, 1]$ by definition.

Statement \ref{statement2} comes from that $\mu_\theta(x) = 1$ happens if and only if $K(\phi_\theta(x), a) = 1$, which happens if and only if $\phi_\theta(x) = a$ (i.e., if and only if $x\in M_a$) for a Morse kernel.

Statement \ref{statement3} is obvious: The negative logarithm is positive on $[0, 1]$ and vanishes when $\mu_\theta(x) = 1$, that is if and only if $x$ is a point in the mode submanifold $M_a$.

Statement \ref{statement4} requires us to show that its Hessian $\nabla^2 V_\theta(x)$ vanishes only on the tangent space to $M_a$ at all points $x$ in $M_a$. 
For the sake of simplicity, let us prove this only in the case where $\phi_\theta(x)$ takes its values in $Z=\R$. To begin with, let us compute the gradient of $V_\theta(x)$ (we remove in the computation below the dependence on $\theta$ to simplify the notation):
\begin{equation}
    \nabla_x V(x) = \frac {-1}{K(\phi(x), a)}\frac{\partial K}{\partial z_1}(\phi(x), a)\nabla_x\phi(x)
\end{equation}
Then Hessian has then three terms:
\begin{equation*}
\nabla_x^2 V(x) 
= 
    \frac{\partial}{\partial x}\left(\frac {-1}K\right)\frac{\partial K}{\partial z_1}(\phi(x), a)\nabla_x\phi(x)
    + \frac{-1}K \frac{\partial^2 K}{\partial z_1\partial z_1}K(\phi(x), a) \nabla_x\phi(x)^T\nabla_x\phi(x) 
    + \frac{-1}K \frac{\partial K}{\partial z_1}(\phi(x), a)\nabla_x^2\phi(x)
\end{equation*}

Now by definition for $x\in M_a$, we have that $\phi(x) = a$. Since for each $a'\in Z$ the function $a'\rightarrow K(a', a)$ reaches its global maximal value 1 at $a'= a$, we have that , we have that:
\begin{equation}
    K(a, a) = 1, \quad \frac{\partial K}{\partial z_1}(a, a) = 0, \quad \textrm{and}\quad
    \frac{\partial^2 K}{\partial z_1\partial z_1}(a, a) < 0.
\end{equation}
Using this last relations when evaluating the Hessian of $V$ at a point $x \in M_a$, we obtain that all the terms except for the middle one in the Hessian expression vanish, yielding:
\begin{equation}
    \nabla_x^2 V(x)
= 
    - \frac{\partial^2 K}{\partial z_1\partial z_1}(a, a)\nabla_x\phi(x)^T \nabla_x\phi(x)
=   - \frac{\partial^2 K}{\partial z_1\partial z_1}(a, a)P(x),
\end{equation}
where $P(x) = \nabla_x\phi(x)^T \nabla_x\phi(x)$ is the orthogonal projector onto the subspace spanned by the gradient $\nabla_x \phi_\theta(x)$ for all $x\in M_a$.
Thus,  $\nabla_x^2 V(x) v = 0$ happens if and only if $P(x) v = 0$, which is equivalent to $v$ being orthogonal to the gradient $\nabla_x\phi_\theta(x)$. In other words, $\nabla_x^2 V(x) v = 0$ if and only if $v$ is in the tangent space to $M_a$ at x (since the $\nabla_x\phi_\theta(x)$ is orthogonal to that space). 
\end{proof}

\begin{remark}
In the theorem above, the hyper-parameter $a$ needs to be a regular value of the map $\phi_\theta(x)$ (i.e. $a$ should not be in the image of the critical points of $\phi_\theta(x)$) in order for $M_a$ to be a submanifold (see \cite{hirsch1997differential} for details). Luckily,  \citet{Morse1939TheBO} and  \citet{Sard1942TheMO} showed that the values $a$ for which this is not the case (i.e., the image of the critical values) has measure null in $Z$.
\end{remark}

\begin{remark}
Let us conclude this section by observing that 
\citet{basu2020aconnection} and \citet{prasad2023cutlocus} show that the square of the distance function $d(x)$ from a point $x$ in a Riemannian submanifold $X$ to submanifold $M\subset X$ encodes important topological information about the submanifold $M$. From that observation, we conjecture that the negative logarithm of a fitted Morse network $V_\theta(x) = -\log \mu_\theta(x)$ also encodes important topological information about the mode submanifold of the data distribution. We refer the reader to \cite{Austin1995MorseBottTA} for a detailed presentation of Morse-Bott theory in topology. 
\end{remark}

\section{Experiment details.}
\label{section:experiment_details}

\paragraph{Figure 1, top row:}
We trained a ResNet with 6 ResNet blocks consisting in a dense layer with 128 neurons followed by a dropout layer with dropout rate set to 0.1 and a skip connection to classify the \href{https://scikit-learn.org/stable/modules/generated/sklearn.datasets.make\_moons.html}{two-moons dataset} with the noise parameter set to 0.2. The data was fitted with 100 epochs using Adam and a learning rate of 1e-4 on a batch size of 128. The network outputs a logit vector with two components, one for each class.  The decision boundary found by this model is plotted on the leftmost plot on the top row. We see that the model is very confident once one moves away from the decision boundary. The next 3 figures on this row use the same model, except that their logits are scaled  by the Morse temperature $T(x) = 1/\mu_\theta(x)$ with decreasing kernel bandwidths : 1, 0.1, and 0.01. One observes that the scaled ResNet becomes more uncertain away from the training data (rather than the decision boundary) and the more so as the kernel bandwidth decreases. The Morse network whose temperature was used to scale the ResNet logits was trained independently directly on the two-moons data in a fully unsupervised fashion. For the Morse network we used 5 dense ReLU layers each with 500 neurons followed by a ReLU output layer with one neuron (target space dimension set to 1) and we chose the Gaussian kernel. The hyper-parameters $a$ was set to 2. We trained for 2 epochs with full batch  Adam and learning rate 1e-3.

\paragraph{Figure 1, bottom row:}

We trained an unsupervised Morse model on the noiseless \href{https://scikit-learn.org/stable/modules/generated/sklearn.datasets.make\_moons.html}{two-moons dataset}. The Morse network had a Gaussian kernel, 4 dense ReLU layers with 500 neurons each and a 1-dimensional output layer. The hyper-parameter $a$ was set to 2. We fitted the network with full-batch Adam with learning rate set to 1e-3.

\textbf{Leftmost plot:} We plot the values of the Morse network $\mu_\theta(x)\in[0,1]$ on a \href{https://matplotlib.org/3.5.3/api/_as_gen/matplotlib.pyplot.magma.html}{magma colormap}. We observe that the network has values close to 1 around the data modes (the two moons) although it experiences some difficulty with the leftmost edge of the orange moon. We believe that the reason for this phenomenon is that the mode submanifold is disconnected and the Morse network is trying to learn a connected submanifold. In Figure \ref{figure:two_moons_supervised} below, we show that the supervised Morse network is able to learn disconnected mode submanifolds with one connected piece per label.
\textbf{Middle left plot:} We plot the values of the Morse network restricted between 0.95 and 1, where they most strongly concentrate around the mode lines. We see that the unsupervised Morse network modes approximate well the two-moons with a connected submanifold. \textbf{Middle right plot:} We plot the Morse OOD detector associated with the Morse network $s_\theta(x) = 1 - \mu_\theta(x)$. We see that the detector outputs a probability superior to 0.5 (and rapidly reaches 1) as we depart from the modes. 
\textbf{Rightmost plot:} To visualize how the Morse sample generator works we plot the flow lines of the differential equation $\dot x = -\nabla_x V_\theta(x)$, where $V_\theta(x)=-\log \mu_\theta(x)$ from a number of initial points: 
$
    [0.0, -2.0],
    [-2.0, 2.0],
    [2.0, -2.0],
    [-2.0, 1.0],
    [-1.0, 2.0],
    [2.0, -2.0],
$
To approximate these flow lines we use gradient descent $x_{n+1} = x_n - h\nabla_x V_\theta(x)$ with a learning rate $h=0.001$ for 1000 steps. The last step of this procedure is the generated sample. We see that the flow lines converge rapidly to the learned modes, which is to be expected as $V_\theta$ is a form of squared distance from the learned modes.

\begin{figure}[h]
\centering
\includegraphics[width=0.3\columnwidth]{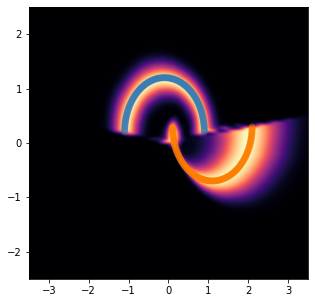}
\caption{
    \textbf{Supervised Morse model:} The supervised Morse model is able to learn disconnected mode submanifolds for each of the labels. The model architecture and training setup is the same as the unsupervised case described above, except for the output layer that has a dimension of 2 (one dimension per label).
} 
\label{figure:two_moons_supervised}
\end{figure}

\paragraph{Table 1:}
\textbf{FashionMNIST:} 
We trained an unsupervised Morse network $\mu_\theta(x)$ directly on the FashionMNIST images, which we flatten into 1-dimensional vectors. The Morse network processed these images through 5 dense ReLU layers with 500 neurons each followed by an 1-dimensional output dense ReLU layer. We used the Gaussian kernel $K_\lambda(z, z') = \exp(-\lambda \|z - z'\|^2)$ with $\lambda = 1$ and the hyper-parameter $a$ was set to 10. For the regularization term in the Morse loss, we sampled uniformly over the hyper-cube with side $[-5, 5]$. To fit the Morse network we used Adam with a learning rate of 0.001 and a batch size of 1000 images over 4 epochs. We computed AUROC of the Morse OOD detector $s_\theta(x) = 1 - \mu_\theta(x)$ for MNIST. To compare our results with a baseline we used the best performance of the unsupervised OOD detectors in the same setting presented in \cite{warren2021density}. The results are in Table 1. 
\textbf{CIFAR10:} 
We trained a Morse network on the embeddings (penultimate layer) produced by a \href{https://github.com/google-research/big_vision}{vision transformer}, which was pre-trained on ImageNet-21k and fine-tuned to classify the CIFAR10 images. We used 12 layers and 12 attention heads with no dropout. The patch size was set to $[16,16]$, the hidden layers dimension to 768, the mlp dimension to  3072, with no dropout. We fine-tuned the pre-trained vision transformer using momentum with learning rate 0.003, batch size of 512 for 10000 steps, gradient clipping at 1.0, with a cosine decay schedule with 500 warm up steps.
We tuned the Morse network for best AUROC on SVHN, which was achieved after 14 steps: the tuned Morse network processed the vision transformer CIFAR10 embeddings through 5 dense leaky ReLU layers with 500 neurons each followed by an 1-dimensional output dense leaky ReLU layer. We used the kernel $K_\lambda (z, z') = \frac{1}{\sqrt{1 + \lambda \|z - z'\|^2}}$ with $\lambda = 0.1$ and the hyper-parameter $a$ was set to 10. For the regularization term in the Morse loss we sampled uniformly over the hyper-cube with side $[-5, 5]$. To fit the Morse network we used Adam with a learning rate of 0.001 and a batch size of 1000 images. We then computed the AUROC of the tuned Morse OOD detector $s_\theta(x) = 1 - \mu_\theta(x)$ for CIFAR100. The results are in Table 1. In the caption of Table 1, we report unsupervised Morse AUROC trained on the CIFAR10 images directly rather than on the pre-trained embeddings of a vision transformer. In that experiment, we re-used the exact same setup for the Morse network as in the case of FashionMNIST, described above.

\paragraph{Supervised Morse, section 4:} 
We trained a supervised Morse network on the same CIFAR10 embeddings produced by the vision transformer described in the unsupervised Morse experiment (see paragraph above) but with the additional information of the image labels. The tuning was done to achieve best AUROC performance on SVHN, which was reached after 9 steps. The tuned Morse network used a Cauchy kernel $K_\lambda (z, z') = \frac{1}{1 + \lambda \|z - z'\|^2}$ with $\lambda = 1$. The hyper-parameter $a$ was set to 1. We used 3 dense leaky ReLU layers with 400 neurons without biases followed by a 10-dimensional (1 dimension per class) output leaky ReLU layer with bias. The uniform sampling for the regularization term in the Morse loss was done in a hyper-cube of side $[-3, 3]$. We trained with Adam and a learning rate of 0.003 with batch size 1000. We then computed the best AUROC for the tuned Morse network on CIFAR100.


\end{document}